\algrenewcommand\algorithmicindent{1em}
\DeclareMathOperator*{\argmin}{arg\,min}
\newtheorem{lem}{Lemma}
\newtheorem{thm}{Theorem}
\newcommand{\bbN}{\mathbb{N}}
\newcommand{\bbZ}{\mathbb{Z}}
\newcommand{\bbC}{\mathbb{C}}
\newcommand{\calO}{\mathcal{O}}
\newcommand{\calC}{\mathcal{C}}
\newcommand{\bfc}{\mathbf{c}}
\newcommand{\bfz}{\mathbf{Z}}
\newcommand{\bfg}{\mathbf{g}}
\newcommand{\sfB}{\mathsf{B}}
\newcommand{\Mult}{\mathsf{Mult}}
\newcommand{\RotCt}{\mathsf{RotCt}}
\newcommand{\RotVec}{\mathsf{RotVec}}
\newcommand{\Rot}{\mathsf{Rot}}
\newcommand{\Enc}{\mathsf{Enc}}
\newcommand{\Dec}{\mathsf{Dec}}
\newcommand{\Boot}{\mathsf{Boot}}
\def\BibTeX{{\rm B\kern-.05em{\sc i\kern-.025em b}\kern-.08em
    T\kern-.1667em\lower.7ex\hbox{E}\kern-.125emX}}
\begin{document}

\title{Relative Entropy Regularized Reinforcement Learning for Efficient Encrypted Policy Synthesis}

\author{Jihoon Suh, Yeongjun Jang, \IEEEmembership{Graduate Student Member, IEEE}, Kaoru Teranishi, \IEEEmembership{Member, IEEE}, and Takashi Tanaka, \IEEEmembership{Senior Member, IEEE}
\thanks{\copyright 2025 IEEE. This is the accepted version of the article: J. Suh, Y. Jang, K. Teranishi, T. Tanaka, ``Relative Entropy Regularized Reinforcement Learning for Efficient Encrypted Policy Synthesis,'' IEEE Control Systems Letters, pp. 1--1, June 2025, doi: 10.1109/LCSYS.2025.3578573. The final published version is available at: https://ieeexplore.ieee.org/document/11030858}
\thanks{J. Suh, K. Teranishi, and T. Tanaka are with the School of Aeronautics and Astronautics, Purdue University, West Lafayette, IN 47907, USA (e-mail: \{suh95, kteranis, tanaka16\}@purdue.edu).}
\thanks{Y. Jang is with ASRI, the Department of Electrical and Computer Engineering, Seoul National University, Seoul, 08826, Korea (e-mail: jangyj@cdsl.kr).}
\thanks{K. Teranishi is also with Japan Society for the Promotion of Science, Chiyoda, Tokyo 102-0083, Japan.}
}

\maketitle
\thispagestyle{empty}
\pagestyle{empty}

\begin{abstract}
We propose an efficient encrypted policy synthesis to develop privacy-preserving model-based reinforcement learning. We first demonstrate that the relative-entropy-regularized reinforcement learning framework offers a computationally convenient linear and ``min-free'' structure for value iteration, enabling a direct and efficient integration of fully homomorphic encryption with bootstrapping into policy synthesis. Convergence and error bounds are analyzed as encrypted policy synthesis propagates errors under the presence of encryption-induced errors including quantization and bootstrapping.
Theoretical analysis is validated by numerical simulations. Results demonstrate the effectiveness of the RERL framework in integrating FHE for encrypted policy synthesis.
\end{abstract}

\begin{IEEEkeywords}
Encrypted Control, Fully Homomorphic Encryption, Reinforcement Learning
\end{IEEEkeywords}

\section{INTRODUCTION}
\IEEEPARstart{R}{einforcement} Learning (RL) is a useful and widely recognized 
framework for solving an optimal control problem \cite[Ch. 1]{sutton2018reinforcement}.
RL algorithm is \emph{model-free} when an explicit environment model---such as system dynamics or reward structures---is not used, and \emph{model-based} when the model is used (whether the model is known a priori, or learned by interacting with the environment).
Sample efficiency is one of the strengths of model-based RL, making it successful for high-risk applications with expensive real-world data such as robotics and finance.
Despite its sample efficiency, building a high-fidelity, and often large-scale model can be a challenge as developing it requires extensive domain expertise and significant investments; yet, for the same reasons, owning such models can be an advantage against competitors. 

After the model is learned satisfactorily, these expensive models may need high-performance computing resources for subsequent computations, such as planning.
While outsourcing these tasks to existing third-party cloud servers can be an attractive solution for maintenance and scalability reasons, it unveils additional data privacy and security issues, which are frequently overlooked yet critical.
For example, there exists a potential eavesdropping threat on the data being transmitted.
In addition, the cloud could try to steal critical information from the hard-earned proprietary model.

Encrypted control \cite{kim2022comparison, schluter2023brief} is a framework that aims to solve the aforementioned security issues of networked control systems. It integrates various cryptographic computation methods, such as homomorphic encryption (HE) or random affine transformations, into control and decision-making processes.
HE is an encryption scheme that allows direct computation of addition and/or multiplication over encrypted data without decryption.
Building upon its initial concept \cite{Kogiso2015CybersecurityEO}, the development of encrypted control has mainly focused on implementing a pre-synthesized controller over encrypted data, for example, infinite-time-horizon dynamic controllers \cite{Teranishi2021InputOutputHF, schluter2021encrypted, kim2022dynamic, jang2024ring}, model predictive control \cite{ec_mpc1, ec_mpc2}, and cooperative control \cite{ec_cooperative2}.

Compared to control \emph{implementation}, control \emph{synthesis} has been less explored in the literature on encrypted control; nevertheless, control synthesis often involves highly private models and specific constraints, as discussed earlier for model-based RL.
In this context, there have been recent efforts to implement encrypted control for private control \textit{design} beyond implementation, such as solving a quadratic programming \cite{alexandru2020cloud} and tuning PID parameters \cite{schluter2022pidtuning}.
However, in a more focused view on integrating RL framework, we only find encrypted model-free RL \cite{suh2021sarsa, suh2021encrypted} closely related to this work; \cite{dzurkova2024approximated} also ties RL with encrypted control but discusses encrypted implementation of the pre-trained explicit NMPC control law via deep RL algorithm.

One of the main difficulties observed in integrating RL with encrypted control has been the restricted arithmetic flexibility of HE because operations commonly needed in RL, such as $\min$, $\max$, or comparison, cannot be approximated easily using only homomorphic additions and multiplications.
Worse yet, HE schemes generally have a limit on the number of multiplications allowed (see Section~\ref{subsec:CKKS}).
Due to these factors, \cite{suh2021sarsa} and \cite{suh2021encrypted} required intermediate re-encryption of encrypted values, which necessitates constant communication efforts from the client.

This paper aims to serve as the foundation for privacy-preserving model-based RL. 
We consider the client-server architecture given in \cite{suh2021sarsa, suh2021encrypted}, where a resource-limited client wishes to outsource the offline policy synthesis task to a cloud server in a privacy-preserving manner, as the client's learned model is assumed to be privacy-sensitive.
The client has a relatively accommodating timeframe for the completion of the synthesis, enjoying more arithmetic flexibility (by the use of bootstrapping) compared to many online implementation scenarios with strict real-time requirements, where bootstrapping must be underutilized.

This paper makes four main contributions :
\begin{itemize}[leftmargin=*]
    \item We demonstrate that a certain class of RL, which we call relative-entropy-regularized RL (RERL), offers a linear and min/max-free structure that is particularly effective in integrating HE.
    \item We propose \textit{Encrypted RERL} that efficiently synthesizes the control policy over the encrypted model. The efficient algorithm allows the direct use of bootstrapping and does not require communication-heavy intermediate re-encryptions.
    \item We analyze the convergence error bounds for the \textit{Encrypted RERL} under the encryption-induced errors.
    \item We validate the feasibility of the \textit{Encrypted RERL} convergence error analysis using numerical experiments. We distribute an open-source library that includes the experiment in this letter, as well as base homomorphic operations and high-level subroutines such as linear algebra operations and bootstrapping used to experiment, enabling reproducibility and further extensions.
\end{itemize}

\emph{Notation:} The set of integers, positive integers, and complex numbers are denoted by $\bbZ$, $\bbN$, $\bbC$, respectively.
For a (matrix) vector of scalars, $\|\cdot\|$ denotes the (induced) infinity norm.
The Hadamard product (element-wise multiplication) is written by $\odot$.
For $x\in\bbC^n$ and $r\ge 0$, we denote by $\RotVec_r(x)$ the vector obtained by shifting the element of $x$ upward (or left) by $r$ positions. 
For example, $\RotVec_2([1 \ 2 \ 3 \ 4 \ 5]^\top) = [3 \ 4 \ 5 \ 1 \ 2]^\top$.

\section{Preliminaries}\label{prelim}

\subsection{Fully Homomorphic Encryption - CKKS Cryptosystem}\label{subsec:CKKS}
Generally speaking, an HE scheme is a fully homomorphic encryption scheme (FHE) if it supports both homomorphic addition and multiplication for an arbitrary number of times.
Typically, a FHE scheme accompanies bootstrapping \cite{gentry}, a special method to enable an arbitrarily many number of homomorphic operations.
Throughout this letter, we use the CKKS (Cheon-Kim-Kim-Song) cryptosystem\cite{cheon2017homomorphic}, which is a FHE with its bootstrapping \cite{cheon2018bootstrapping}. We briefly review its basic operations, security, and effects of errors in the next.

A message (to be encrypted) lives in $\bbC^{N/2}$, the $N/2$ dimensional (i.e., $N/2$ slots) vector space of complex floating-point numbers.
The message is first encoded into a plaintext; quantization error is introduced during this encoding process as the message turns into integers by multiplying a scaling factor $\Delta>0$ and rounding. 
The encryption---or the encoding--and--encryption\footnote{\emph{Encoding} typically takes place before encryption. However, for simplicity, we make this implicit as a part of the encryption algorithm $\Enc$. Similarly, \emph{decoding} takes place after $\Dec$ but is made implicit.} to be precise---algorithm $\Enc: \bbC^{N/2} \to \calC$ maps the input message to an encrypted message (the ciphertext) that lives in $\mathcal{C}$, the ciphertext space.

The following homomorphic operations can be used in the CKKS cryptosystem to conduct arithmetic on ciphertexts:
\begin{itemize}
    \item $\hphantom{\RotCt_r:\calC  \to \calC} \mathllap{\oplus:\calC \times \calC \to \calC}$ \quad (addition)
    \item $\hphantom{\RotCt_r:\calC  \to \calC} \mathllap{\otimes:\calC \times \calC \to \calC}$ \quad (element-wise multiplication) 
    \item $\RotCt_r:\calC  \to \calC$ \quad ($\RotVec_r$)
\end{itemize}
and can be understood better by examining their effects on decryption. The decryption algorithm $\Dec: \calC \to \bbC^{N/2}$ \emph{approximately} recovers the message encrypted by $\Enc$.
For example, $\oplus$ evaluates the addition of two messages $x_1, x_2 \in \bbC^{N/2}$ as $\Dec(\Enc(x_1)\oplus \Enc(x_2)) \simeq x_1 + x_2$.
The decryption is \emph{approximate}\cite{cheon2017homomorphic} for several reasons: each ciphertext has its quantization error from encoding, and is injected with small errors during $\Enc$, as the security of CKKS relies on the hardness of the Ring Learning With Errors problem \cite{LyubPeik10}.

Notably, although necessary for quantization and security, these errors can grow under homomorphic operations. If these errors grow beyond the limit, the decryption can fail. This necessitates the bootstrapping operation, which, though expensive, resets the accumulated errors in an old ciphertext:
$$\Boot: \calC \to \calC \quad (\text{bootstrapping)}$$

The accumulation of errors under each homomorphic operation discussed earlier can be upper bounded as in Lemma~\ref{lem:homo}; they are presented with Big-O notations for simplicity, but concrete bounds can be found in  \cite{cheon2017homomorphic} and \cite{cheon2018bootstrapping}.
\begin{lem}\label{lem:homo}
    For any $x\in\bbC^{N/2}$, $\bfc,\bfc'\in \calC$, and $r\ge 0$, there exists
$\sfB^{\Enc},\sfB^{\Mult},\sfB^{\RotCt},\sfB^{\Boot}\in \calO(N/\Delta)$ such that 
    \begin{subequations}
        \begin{align}
            \left\|\Dec(\Enc(x)) - x \right\| &\le \sfB^{\Enc}, \label{eq:correct} \\
            \left\|\Dec(\bfc\oplus\bfc') - \left(\Dec(\bfc) + \Dec(\bfc')\right)  \right\|&=0, \label{eq:homoAdd} \\
            \left\|\Dec(\Enc(x)\otimes \bfc) - x\odot \Dec(\bfc)\right\| & \le \sfB^{\Mult}, \label{eq:homoMult}\\
            \left\|\Dec(\RotCt_r(\bfc)) - \RotVec_r(\Dec(\bfc))\right\| & \le \sfB^{\RotCt}, \label{eq:homoRot}\\
            \left\|\Dec(\Boot(\bfc)) - \Dec(\bfc)\right\| & \le \sfB^{\Boot}.\label{eq:boot}
        \end{align}
    \end{subequations}
\end{lem}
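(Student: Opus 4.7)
The plan is to leverage the detailed noise analyses already carried out in the CKKS construction paper \cite{cheon2017homomorphic} and its bootstrapping counterpart \cite{cheon2018bootstrapping}, specializing each published bound to the infinity-norm statements here. The structure of the proof is therefore operation-by-operation, tracking how each $\sfB$-constant arises from a combination of quantization error, RLWE noise, key-switching noise, and polynomial approximation error.

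First I would handle \eqref{eq:correct}. The encoding step scales the message by $\Delta$ and rounds, contributing an infinity-norm rounding error of order $1/\Delta$ per slot. The subsequent encryption injects a small RLWE error whose canonical-embedding norm scales linearly with the ring dimension $N$; since decoding divides by $\Delta$, this encryption noise contributes an additional term of order $N/\Delta$, so the combined bound is in $\calO(N/\Delta)$.

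Next is \eqref{eq:homoAdd}, which is immediate: homomorphic addition is coefficient-wise addition of the two RLWE ciphertext pairs, and the decryption map is linear in the ciphertext, so no new error is introduced and the right-hand side is exactly zero. For \eqref{eq:homoMult} and \eqref{eq:homoRot}, I would account for three sources of error: the tensor product (for multiplication) or Galois automorphism (for rotation) on the underlying cyclotomic ring, the subsequent key-switch that re-linearizes the result under the original secret key, and the final rescaling that divides by $\Delta$ to restore the scale. The key-switching term dominates and, after rescaling, sits at $\calO(N/\Delta)$, yielding the stated bounds. Note that \eqref{eq:homoMult} is written with one plaintext operand $x$ (via $\Enc(x)$), so the ``message-times-ciphertext'' variant is the relevant specialization from \cite{cheon2017homomorphic}.

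Finally, for \eqref{eq:boot}, bootstrapping raises the ciphertext modulus, applies a slot-to-coefficient linear transform, homomorphically evaluates a polynomial approximation of modular reduction (typically via a scaled sine function), and then applies the coefficient-to-slot transform. The total error bundles the polynomial-approximation error, the homomorphic evaluation error of that polynomial (itself a cascade of bounded multiplications and rotations handled by the previous cases), and the two linear-transformation errors, all of which are shown in \cite{cheon2018bootstrapping} to fit within $\calO(N/\Delta)$ under standard parameter choices. The main obstacle is this bootstrapping bound, since many homomorphic operations are composed and the approximation error must be balanced against the multiplicative depth of the evaluating circuit; however, because the lemma only requires an asymptotic infinity-norm bound rather than tight constants, the proof reduces to a careful transcription of the published analyses into the notation used here.
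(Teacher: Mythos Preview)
Your proposal is correct and aligns with the paper's own treatment: the paper does not supply an independent proof of this lemma but simply states that ``concrete bounds can be found in \cite{cheon2017homomorphic} and \cite{cheon2018bootstrapping},'' deferring entirely to those references. Your operation-by-operation sketch is therefore a more detailed elaboration of exactly the route the paper takes, not a different one.
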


Here, $\bfc$ and $\bfc'$ in Lemma~\ref{lem:homo} are not necessarily the fresh outputs from $\Enc$; they could be the output ciphertexts on which a composition of many homomorphic operations is applied. Lemma~\ref{lem:homo} states that the error growth attributed to each low-level homomorphic operation is bounded by some constants. They will help break down the total error analysis (Section~\ref{subsec:convergence}) for the high-level encrypted algorithm, which consists of multiple low-level homomorphic operations.

The parameters $N$ and $\Delta$ affect the security of the CKKS cryptosystem; roughly, increasing $N$ or decreasing $\Delta$ enhances the security level. In practice, these parameters must be chosen carefully under the security and precision requirements; however, choosing practical security parameters is beyond the scope of this letter; we refer to \cite[Sec. 5]{cheon2017homomorphic}.

\subsection{Model-Based Reinforcement Learning}\label{subsec:mbrl}
RL\cite{sutton2018reinforcement} can be formalized by the Markov Decision Processes (MDP) framework.
A discrete-time MDP with time-invariant dynamics and a cost function is formalized by the tuple $(\mathcal{X}, \mathcal{U}, P, C)$, where $\mathcal{X}$ is the finite state space, $\mathcal{U}$ is the finite action space, 
$P(\cdot|x,u):\mathcal{X}\to [0, 1]$ is the state transition probability distribution governing the system dynamics when input $u$ is applied at state $x$,
and $C:\mathcal{X}\times\mathcal{U}\rightarrow\mathbb{R}$ is the cost incurred during state transition.

A policy $\pi = (\pi_{0}, \pi_{1}, \pi_{2}, \cdots)$ is a sequence of probability distributions over actions, where $\pi_{t}(u_t|x_{0:t}, u_{0:t-1})$ gives the probability of choosing an action $u_t\in\mathcal{U}$ conditioned on the history $(x_{0:t}, u_{0:t-1})$.
For a time-invariant and discounted MDP, an optimal policy of the form $\pi(u_t|x_t)$, that is time-invariant and Markov, always exists \cite{puterman2014markov}. 
Without loss of optimality, we assume the optimal policy to be deterministic in the sequel, i.e., $u_t = \pi(x_t)$ for all $x_t\in\mathcal{X}$.

The expected cumulative cost of a policy $\pi$ can be quantified by the value function $V^{\pi}: \mathcal{X} \rightarrow \mathbb{R}$:
\begin{equation}
\label{eq:value_pi_expanded}
V^\pi(x)=\mathbb{E}^\pi[C(x_0, u_0) + \textstyle\sum_{t=1}^{\infty}\gamma^{t}C(x_t, u_t) \mid x_0=x],
\end{equation}
where $C(x_0, u_0)$ is the immediate cost for the initial state $x_0=x$, and $\gamma \in [0, 1)$ is the discount applied to the future cost starting from the next state $x_{1} \in \mathcal{X}$.
For deterministic policies, the value function
\eqref{eq:value_pi_expanded} admits a recursive form:
\begin{equation}
\label{eq:value_pi_recursion}
V^\pi(x) =C(x, u) + \gamma \textstyle\sum_{x'\in \mathcal{X}}P(x' | x, u)V^{\pi}(x').
\end{equation}

The optimal policy $\pi^{*}$ minimizes \eqref{eq:value_pi_recursion} for all $x \in \mathcal{X}$, and we denote the corresponding optimal value function by $V^{*}$ 
This leads to the celebrated Bellman's principle of optimality:
\begin{equation*}
\label{eq:bellman_optimality}
V^{*}(x)=\textstyle\min_{u \in \mathcal{U}}[C(x, u) + \gamma \textstyle\sum_{x'\in \mathcal{X}}P(x' | x, u)V^{*}(x')],
\end{equation*}
and
$\pi^{*}$ can be synthesized as the minimizer
\begin{equation*}
\label{eq:optimal_policy}
\pi^*(x)=\textstyle\argmin_{u\in\mathcal{U}} [C(x,u)+\gamma\! \textstyle\sum_{x'\in\mathcal{X}} P(x'|x,u)V^*(x')].
\end{equation*}
Policy synthesis aims to learn the optimal policy $\pi^{*}$. 

\emph{Value iteration} (VI) is an iterative algorithm that solves for the optimal value $V^{*}$ given an arbitrary initial value $V_{0}(x)$:
\begin{equation}
\label{eq:value_iteration}
V_{k+1}(x)=\textstyle\min_{u \in \mathcal{U}}  [C(x, u) + \gamma \textstyle\sum_{x'\in \mathcal{X}}P(x' | x, u)V_{k}(x')], \notag
\end{equation}
and the convergence of VI is guaranteed after a finite number of iterations when the state and action spaces are finite.

The client can consider outsourcing the above VI to a resourceful cloud server. However, the client risks unwanted disclosure of its private model. While HE can provide privacy protection, it introduces a technical challenge for the client: homomorphically evaluating the $\min$ function using only additions and a limited number of multiplications.

Though approximating $\min$ is possible, accurate approximations require a direct evaluation of high-degree polynomials, or a separate iterative algorithm \cite{hecomparison}, consuming many multiplications for each iteration of VI, making this approach inefficient. This motivates us to propose an alternative framework in the sequel, which leads to an efficient algorithm entirely without the need for evaluating the $\min$ itself.

\section{Main Results: Encrypted Policy Synthesis} \label{main}
This section first presents the relative-entropy-regularized RL (RERL) framework, closely related to linearly-solvable MDP \cite{todorov2009efficient},
and derives a linear and $\min$-free VI. This HE-friendly VI resolves the limitation noted towards the end of Section~\ref{subsec:mbrl}, and leads to an efficient encrypted algorithm, which we call \textit{Encrypted RERL} in Section~\ref{encrypted_RERL}.

\subsection{RERL Framework and HE-Friendly Value Iteration}
Suppose we are given a nominal probability distribution $b(u|x)$, which can be understood as the default policy of an agent. In RERL, the optimal policy is not necessarily deterministic---therefore, we optimize over the space of time-invariant, Markov, and randomized policies $\pi(u|x)$ in the sequel. The stage-wise cost to be minimized 
by the policy $\pi(u|x)$ 
is
\begin{equation}\label{eq:aug_cost}
    C_{\text{aug}}(x, u) = C(x, u) + \lambda \log \frac{\pi(u|x)}{b(u|x)},
\end{equation}
where we assume $C(x, u)\geq 0$ for each state-action pair $(x, u)$ and $\lambda > 0$ is a regularization constant.
When taking the expectation of $C_{\text{aug}}(x, u)$ with respect to actions sampled from the policy $\pi$---in the similar context of the expected cumulative cost \eqref{eq:value_pi_expanded}, the second term becomes 
$\textstyle\sum_{u \in \mathcal{U}}\pi(u|x)\log\textstyle\frac{\pi(u|x)}{b(u|x)}$, 
which corresponds to the definition of RE of $\pi(\cdot|x)$ from $b(\cdot|x)$. Thus, the second term can be thought of as a penalty for deviations from the default policy $b(u|x)$. We also impose that $\pi(u|x) = 0$ for all $u\in \mathcal{U}$ such that $b(u|x) = 0$.

To exploit the computational advantages of RERL, we make the following assumptions:
\begin{enumerate}[label=(A-\arabic*), leftmargin=*, itemindent=1em, labelsep=0.5em]
    
    \item The state transition $P(x_{t+1}|x_{t}, u_t)$ is deterministic. In other words, there exists a deterministic function $F$ such that $x_{t+1} = F(x_t, u_t)$. \label{assume:1}
    \item  \label{assume:2} The cost to be minimized is the following expected \emph{undiscounted} cumulative cost:
    \begin{equation}\label{eq:value_pi_modified}  V^\pi(x)=\mathbb{E}^\pi[\textstyle\sum_{t=0}^{\infty}C_{\text{aug}}(x_t, u_t) \mid x_0=x].
    \end{equation}   
    To ensure that \eqref{eq:value_pi_modified} is bounded, we additionally assume
    \item There exists at least one absorbing state $x_{\text{abs}} \in \mathcal{X}$ such that $x_{\text{abs}}$ is reachable from any starting state $x_0\in\mathcal{X}$. Moreover, 
    $C(x, u) > 0$ for all $x \in \mathcal{X} \setminus\{x_{\text{abs}}\}$ and $u\in\mathcal{U}$ while $C(x_{\text{abs}}, u) = 0$ and $\pi(u|x_{\text{abs}}) = b(u|x_{\text{abs}})$ for all $u \in \mathcal{U}$
    and the RERL value function $V^{\pi}(x_{\text{abs}}) = 0$ for all $\pi$.
    \label{assume:3}   
\end{enumerate}
While these assumptions can seem relatively restrictive, many real-world problems can be formulated within this framework~\cite{todorov2009efficient}.
More importantly, these assumptions allow computational advantages as we will show soon.

\begin{lem}\label{lem:RERL_optimal}
    Under assumptions (A-2) and (A-3), the RERL optimal value function $V^*(x)$ is bounded and satisfies
    \begin{equation}
    \label{eq:lem2_vstar}
    V^*(x)=-\lambda \log \left(\sum\nolimits_{u\in\mathcal{U}} b(u|x)\exp\left(-\rho(x,u)/\lambda\right)\right)
    \end{equation}
    where $\rho(x, u) := C(x, u) + \sum_{x'\in\mathcal{X}} P(x'|x,u)V^*(x')$. Moreover, the RERL optimal policy $\pi^{*}$ takes the form of a Boltzmann distribution:
    \begin{equation}\label{eq:pi_star}
        \pi^*(u|x)=\frac{b(u|x)\exp(-\rho(x,u)/\lambda)}{\sum_{u'\in\mathcal{U}}b(u'|x)\exp(-\rho(x,u')/\lambda)}.
    \end{equation}
\end{lem}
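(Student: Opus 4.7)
The plan is to derive \eqref{eq:lem2_vstar} from the Bellman recursion of the RERL problem and to recognize its per-state inner minimization as a Gibbs variational problem. Under (A-2) and (A-3), standard dynamic programming applied to \eqref{eq:value_pi_modified} yields
\begin{equation*}
V^*(x) \;=\; \min_{\pi(\cdot|x)} \sum_{u\in\mathcal{U}} \pi(u|x)\left[\rho(x,u) + \lambda\log\frac{\pi(u|x)}{b(u|x)}\right],
\end{equation*}
where the minimum ranges over probability distributions on $\mathcal{U}$ whose support lies in $\{u:b(u|x)>0\}$, and $\rho(x,u)$ is defined as in the statement.

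For the inner minimization, I would divide by $\lambda$ and absorb $\rho(x,u)/\lambda$ into the logarithm, rewriting the objective as $\sum_u \pi(u|x) \log\bigl(\pi(u|x)/(b(u|x)\,e^{-\rho(x,u)/\lambda})\bigr)$. Introducing the normalizer $Z(x):=\sum_{u'} b(u'|x)\exp(-\rho(x,u')/\lambda)$ and the candidate $\pi^*$ from \eqref{eq:pi_star}, this expression equals $D_{\mathrm{KL}}\bigl(\pi(\cdot|x)\,\|\,\pi^*(\cdot|x)\bigr)-\log Z(x)$. Nonnegativity of KL divergence, with equality iff $\pi(\cdot|x)=\pi^*(\cdot|x)$, immediately identifies the minimizer as the Boltzmann distribution \eqref{eq:pi_star} and the minimum value as $-\log Z(x)$; multiplying by $\lambda$ recovers \eqref{eq:lem2_vstar}.

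For boundedness, $V^\pi(x)\ge 0$ is immediate since $C\ge 0$ and the stage-wise term $\sum_u \pi(u|x)\log(\pi(u|x)/b(u|x))\ge 0$ by Gibbs' inequality. For the upper bound it suffices to exhibit one policy with finite expected augmented cost: (A-3) guarantees $x_{\text{abs}}$ is reachable from any $x_0$, so one can construct (for instance, by tilting $b$ toward a shortest path to $x_{\text{abs}}$) a policy absolutely continuous with respect to $b$ that reaches $x_{\text{abs}}$ in finite expected time with bounded per-stage relative entropy. Combined with $C(x_{\text{abs}},\cdot)=0$ and $\pi(\cdot|x_{\text{abs}})=b(\cdot|x_{\text{abs}})$, this witness yields a finite $V^\pi(x)$, hence $V^*(x)<\infty$.

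The main obstacle is justifying that the undiscounted, self-referential Bellman recursion admits $V^*$ as its unique fixed point on the class of nonnegative functions vanishing at $x_{\text{abs}}$: because there is no discount factor $\gamma<1$ to provide an automatic contraction, one must invoke a stochastic-shortest-path style argument \cite{puterman2014markov} using (A-3) to obtain well-posedness. Once this is in place, the Gibbs identity above transfers the optimization over $\pi$ into the closed-form expressions \eqref{eq:lem2_vstar} and \eqref{eq:pi_star} without further computation.
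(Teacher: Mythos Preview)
Your proposal is correct and follows essentially the same route as the paper: write the Bellman recursion for $V^*$ and then solve the per-state minimization via the Gibbs variational principle to obtain \eqref{eq:pi_star} and \eqref{eq:lem2_vstar}. The paper simply cites \cite[Proposition 1.4.2]{dupuis2011weak} for the minimizer, whereas you supply the elementary $D_{\mathrm{KL}}$ decomposition directly and are more careful about boundedness and the fixed-point issue in the undiscounted setting---points the paper leaves implicit.
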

\begin{proof}
By Bellman's optimality principle, $V^*(x)$ satisfies 
\begin{align}
V^*(x)&=\min_\pi \sum_{u\in\mathcal{U}} \pi(u|x) \!\left(\!C_{\text{aug}}(x,u)+\!\!\sum_{x'\in\mathcal{X}}\!P(x'|x,u)V^*(x')\!\right) \nonumber \\
&=\min_\pi \sum_{u\in\mathcal{U}} \pi(u|x) \left(\rho(x,u)+\lambda\log\frac{\pi(u|x)}{b(u|x)}\right). \label{eq:lem2_min}
\end{align}
The minimizer of the convex optimization \eqref{eq:lem2_min} is given by \eqref{eq:pi_star} -- see \cite[Proposition 1.4.2]{dupuis2011weak} for an elementary proof. The result \eqref{eq:lem2_vstar} is obtained by substituting \eqref{eq:pi_star} into \eqref{eq:lem2_min}.
\end{proof}
The following theorem emphasizes the linearly solvable nature of RERL.
\begin{thm}\label{thm:linear_z}
Suppose assumptions (A-1), (A-2), and (A-3) hold. Let $V^*(x)$ be the RERL optimal value function, and define the desirability function $z^*(x)$ via the exponential transformation $z^*(x):=\exp(-V^{*}(x)/\lambda)$.
Then, $z^*(x)$ satisfies the linear equation   
\begin{equation} \label{eq:z_linear}
    z^*(x) =  \textstyle\sum_{u\in\mathcal{U}} b(u|x)\exp\left(-C(x,u)/\lambda\right) z^*(F(x,u)).
\end{equation}
The optimal policy can be expressed in terms of $z^*(x)$ as
    \begin{equation}\label{eq:recon}
    \pi^*(u|x) = \frac{b(u|x)\exp(-C(x, u)/\lambda) z^{*}(F(x, u))}
                    {z^{*}(x)}.
    \end{equation}
\end{thm}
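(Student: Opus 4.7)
The plan is to derive both identities directly from Lemma~\ref{lem:RERL_optimal}, specializing it to the deterministic-dynamics case prescribed by assumption~\ref{assume:1}. Under \ref{assume:1}, the transition kernel $P(\cdot|x,u)$ concentrates at $F(x,u)$, so the sum $\sum_{x'\in\mathcal{X}} P(x'|x,u)V^*(x')$ collapses to $V^*(F(x,u))$, giving $\rho(x,u) = C(x,u) + V^*(F(x,u))$. Both assumptions \ref{assume:2} and \ref{assume:3} are inherited through Lemma~\ref{lem:RERL_optimal}, which requires them to guarantee that $V^*$ (hence $z^*$) is well defined and bounded.

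Next I would take the exponential transformation $z^*(x) = \exp(-V^*(x)/\lambda)$ on both sides of \eqref{eq:lem2_vstar}. The outer $-\lambda\log(\cdot)$ cancels with the exponential to leave $z^*(x) = \sum_u b(u|x)\exp(-\rho(x,u)/\lambda)$. Then I would split the exponential using the additive form of $\rho(x,u)$ obtained above, namely $\exp(-\rho(x,u)/\lambda) = \exp(-C(x,u)/\lambda)\exp(-V^*(F(x,u))/\lambda) = \exp(-C(x,u)/\lambda)\, z^*(F(x,u))$. Substituting this back yields \eqref{eq:z_linear}, establishing the announced linearity of the Bellman-type equation in $z^*$.

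For \eqref{eq:recon}, I would start from the Boltzmann form \eqref{eq:pi_star} of the RERL optimal policy. Applying the same factorization of $\exp(-\rho(x,u)/\lambda)$ to both numerator and denominator, the denominator becomes exactly $z^*(x)$ by the identity just derived, while the numerator becomes $b(u|x)\exp(-C(x,u)/\lambda)\, z^*(F(x,u))$, producing \eqref{eq:recon}. The boundary condition on $x_{\text{abs}}$ from \ref{assume:3} is automatically consistent: $V^*(x_{\text{abs}}) = 0$ implies $z^*(x_{\text{abs}}) = 1$, and with $C(x_{\text{abs}},u) = 0$ the identity \eqref{eq:z_linear} reduces to $1 = \sum_u b(u|x_{\text{abs}})$, which holds since $b(\cdot|x_{\text{abs}})$ is a probability distribution.

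There is no real obstacle here; the proof is a two-line algebraic reduction. The only subtlety worth a brief sentence in the write-up is justifying that the exponential transformation is well-defined and that the substitution is valid at the absorbing state (so that dividing by $z^*(x)$ in \eqref{eq:recon} is never singular), both of which follow from \ref{assume:3} and the non-negativity of $C$.
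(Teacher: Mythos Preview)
Your proof is correct and follows essentially the same strategy as the paper---apply the exponential transformation to \eqref{eq:lem2_vstar} and then use \eqref{eq:pi_star}---but with one noteworthy simplification. You invoke \ref{assume:1} \emph{first}, collapsing $\sum_{x'}P(x'|x,u)V^*(x')$ to $V^*(F(x,u))$ so that $\rho(x,u)=C(x,u)+V^*(F(x,u))$, and then the exponential splits multiplicatively into $\exp(-C(x,u)/\lambda)\,z^*(F(x,u))$ with no further work. The paper instead keeps the general $P$ inside the exponent, applies Jensen's inequality to pull the expectation over $x'$ outside, and only then observes that under \ref{assume:1} Jensen holds with equality. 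Your ordering is more direct and avoids Jensen entirely; the paper's detour has the expository benefit of making explicit that without deterministic dynamics one obtains only an inequality, clarifying why \ref{assume:1} is essential for linearity. Either route is fully rigorous for the theorem as stated.
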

\begin{proof}
Applying the exponential transformation to \eqref{eq:lem2_vstar}, we obtain
\begin{align*}
&\exp(-V^*(x)/\lambda)  \\
&=\sum_{u\in\mathcal{U}}b(u|x)\exp\left(\!-\frac{C(x,u)}{\lambda}\!+\!\sum_{x'\in\mathcal{X}}P(x'|x,u)\left(\!-\frac{V^*(x')}{\lambda}\!\right)\!\right)  \\
&\leq \sum_{u\in\mathcal{U}}\sum_{x'\in\mathcal{X}}P(x'|x,u)b(u|x)\exp\left(-\frac{C(x,u)}{\lambda}-\frac{V^*(x')}{\lambda}\right) 
\end{align*}
where the last step follows from Jensen's inequality.
Under the assumption (A-1) of deterministic transition, the inequality holds with equality, yielding
\begin{align*}
&\exp(-V^*(x)/\lambda)  \\
&=\sum_{u\in\mathcal{U}}b(u|x)\exp\left(-\frac{C(x,u)}{\lambda}\right)\exp\left(-\frac{V^*(F(x,u))}{\lambda}\right). 
\end{align*}
This proves \eqref{eq:z_linear}. The result \eqref{eq:recon} follows from \eqref{eq:pi_star}.
\end{proof}

Importantly, Theorem~\ref{thm:linear_z} leads to an efficient VI algorithm that is linear and free of $\min$. To see this, let us first assume, without loss of generality,  a single absorbing state $x_{\text{abs}}$, and enumerate the state space as $\mathcal{X}=\{q_1, q_2, \cdots, q_{S}, x_{\text{abs}}\}$, where $S$ is the cardinality of the non-absorbing states.
Then, we construct a vector that consists of the $z^*$ values of non-absorbing states as follows:
\begin{equation}\label{z_vec}
    Z^{*} = \begin{bmatrix}
        z^{*}(q_{1}) & z^{*}(q_{2}) & \cdots & z^{*}(q_{S})
    \end{bmatrix}^{\top}.
\end{equation}
As each element of \eqref{z_vec} satisfies \eqref{eq:z_linear}, we have the following:
\begin{equation} \label{eq:z_star_linear}
    Z^{*} = AZ^{*} + w,
\end{equation}
where $A \in \mathbb{R}^{{S}\times S}$ and $w \in \mathbb{R}^{S}$ are defined element-wise by
\begin{align}
    a_{ij} &:= \textstyle\sum_{u: F(q_i,u) = q_j} b(u|q_i)\exp\left(-C(q_i,u)/\lambda\right), \label{eq:A_elem}\\
    w_{i} & := \textstyle\sum_{u: F(q_i,u) = x_{\text{abs}}} b(u|q_i)\exp\left(-C(q_i,u)/\lambda\right).
\end{align}
Note that we excluded the absorbing state since $z^*(x_{\text{abs}}) = 1$ by (A-3), and iteration is unnecessary.

\begin{lem}\label{lem:contract}
    The spectral radius $r(A)$ of the square matrix $A$ defined by \eqref{eq:A_elem} satisfies $r(A) < 1$.
\end{lem}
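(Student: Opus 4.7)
The plan is to prove $r(A)<1$ via a simple infinity-norm bound, since assumption (A-3) will force every row sum of $A$ to be strictly less than one.

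First I would observe that $A$ is entry-wise non-negative: $b(u|q_i)\in[0,1]$ and $\exp(-C(q_i,u)/\lambda)>0$, so each $a_{ij}\ge 0$. Next I would compute the $i$-th row sum by interchanging the summation order:
\begin{equation*}
\sum_{j=1}^{S} a_{ij} = \sum_{j=1}^{S}\sum_{u:F(q_i,u)=q_j} b(u|q_i)\exp(-C(q_i,u)/\lambda) = \sum_{u\in\mathcal{U}_i^{\text{ns}}} b(u|q_i)\exp(-C(q_i,u)/\lambda),
\end{equation*}
where $\mathcal{U}_i^{\text{ns}} := \{u\in\mathcal{U}:F(q_i,u)\ne x_{\text{abs}}\}$, so that summing the row of $A$ together with $w_i$ recovers the unrestricted sum over all $u\in\mathcal{U}$.

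Now I would invoke assumption (A-3): since $q_i\ne x_{\text{abs}}$, we have $C(q_i,u)>0$ for every $u\in\mathcal{U}$, hence $\exp(-C(q_i,u)/\lambda)<1$. Combined with $\sum_{u}b(u|q_i)=1$, this gives the strict bound
\begin{equation*}
\sum_{j=1}^{S} a_{ij} \;\le\; \sum_{u\in\mathcal{U}} b(u|q_i)\exp(-C(q_i,u)/\lambda) \;<\; \sum_{u\in\mathcal{U}} b(u|q_i) \;=\; 1,
\end{equation*}
valid for every non-absorbing index $i\in\{1,\dots,S\}$. Taking the maximum over $i$ yields $\|A\|_\infty = \max_i\sum_j a_{ij} < 1$.

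Finally, I would conclude by the standard fact $r(A)\le\|A\|_\infty$ for any induced matrix norm, which immediately gives $r(A)<1$. I do not anticipate a serious obstacle here: the whole argument leverages the fact that the nominal distribution $b(\cdot|x)$ is normalized and the running cost is strictly positive off the absorbing state, both of which are explicit hypotheses. The only subtlety worth flagging is the bookkeeping that separates the sum of $a_{ij}$ (transitions to non-absorbing states) from the ``missing mass'' $w_i$ (transition to $x_{\text{abs}}$), but this separation is already done for us by the construction of $A$ and $w$ in \eqref{eq:A_elem}.
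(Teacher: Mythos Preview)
Your proposal is correct and follows essentially the same approach as the paper: both arguments compute the row sums of $A$, use the strict positivity of $C(q_i,u)$ from (A-3) together with $\sum_u b(u|q_i)=1$ to show every row sum is strictly less than $1$, and conclude $r(A)<1$. The only cosmetic difference is that the paper invokes the Perron--Frobenius-type bound $r(A)\le\max_i\sum_j a_{ij}$ for non-negative matrices from \cite[Theorem 8.1.22]{horn2012matrix}, whereas you use the equivalent and slightly more elementary fact $r(A)\le\|A\|_\infty$; since $A\ge 0$ these bounds coincide.
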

\begin{proof}
We have $a_{ij} \geq 0$ for all $i,j$, and thus, the square matrix $A$ is non-negative. Then, by \cite[Theorem 8.1.22]{horn2012matrix}:
\begin{equation}
    \textstyle \min_i \sum_j a_{ij} \leq r(A) \leq \max_i \sum_j a_{ij}.\notag
\end{equation}
In fact, each row sum of $A$ satisfies
\begin{align}
    \textstyle\sum_{j} a_{ij} \!&=\! \textstyle\sum_{j} \sum_{u: F(q_i,u) = q_j} b(u|q_i)\exp\left(-C(q_i,u)/\lambda\right) \nonumber, \\
    \!& =\! \textstyle\sum_{u: F(q_i,u) \in \mathcal{X}\setminus \{x_{\text{abs}}\}} b(u|q_i)\exp\left(-C(q_i,u)/\lambda\right) \!<\! 1 \nonumber,
\end{align}
where the inequality holds because of Assumption (A-2) and (A-3). Therefore, $r(A) < 1$ follows.
\end{proof}

Lemma~\ref{lem:contract} implies that $A$ is contractive and that we have a linear and stable recursion:
\begin{equation}
    \label{eq:lin_sys}
    Z_{k+1} = A Z_{k} + w,
\end{equation}
that converges to ${Z}^{*}$ for any $Z_{0} > 0$. 

\subsection{Encrypted RERL Policy Synthesis}\label{encrypted_RERL}

We are now ready to present \textit{Encrypted RERL}. To proceed, let us first transform the linear system \eqref{eq:lin_sys} into the \emph{encryption-friendly} version using addition, Hadamard product $\odot$, and vector-rotation $\RotVec_r$, as follows:
\begin{equation}\label{eq:nominal}
    \begin{split}
        g_{k,i} &= \textstyle\sum_{r=0}^{S-1} \RotVec_{r}(A_i \odot Z_k),  \\
        Z_{k+1} &=w + \textstyle\sum_{i=1}^S e_i \odot g_{k,i},
    \end{split}
\end{equation}
where $A_i^\top\in\bbC^{1\times S}$ denotes the $i$-th row vector of $A$ and $e_i\in\bbC^{S}$ is the vector with $1$ in the $i$-th component and $0$'s elsewhere for $i=1,\ldots,S$.
It can be seen that $g_{k,i}$ is a vector with elements holding copies of the inner product $A_i^\top Z_k$.

Using operations from Section~\ref{subsec:CKKS}, we can now encrypt \eqref{eq:nominal}, which yields the primary equation\footnote{If we have $S < N/2$ in \eqref{eq:lin_sys}, we can always zero-pad the matrices and vectors until $S=N/2$ to make encryptions in \eqref{eq:enc_RERL} well-defined.} of \textit{Encrypted RERL}:
    \begin{equation}\label{eq:enc_RERL}
        \begin{split}
        \bfg_{k,i} &= \textstyle\sum_{r=0}^{S-1}\RotCt_r(\Enc(A_i)\otimes \bfz_k), \\
        \bfz_{k+1} &= \Boot(\Enc(w) \oplus \textstyle\sum_{i=1}^S \Enc(e_i) \otimes \bfg_{k,i} ),
        \end{split}
    \end{equation}
where the initial value is encrypted as $\bfz_0 = \Enc(Z_0)$ and the summations are taken with respect to the homomorphic addition $\oplus$. 

These two operations of \eqref{eq:enc_RERL} can be executed by the server, who uses the encryptions of $Z_0$, $w$, and $A_i$ and $e_i$ for $i=1,\ldots,S$ (the encrypted model) received from the agent as in Fig~\ref{fig:0}.
\begin{figure}
    \centering  
    \includegraphics[trim=18pt 18pt 18pt 18pt, clip, scale=0.16]{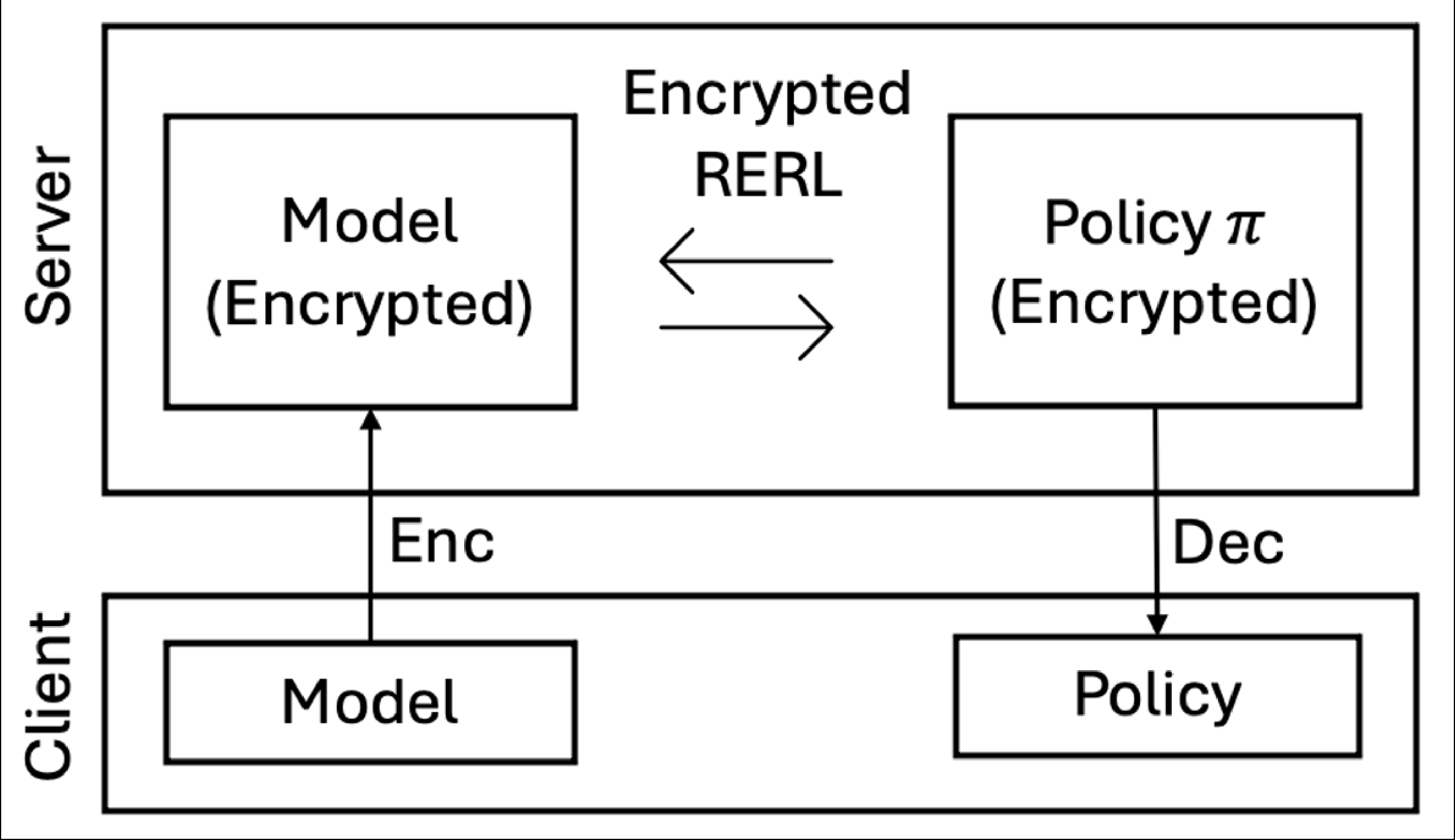}
    \caption{Outsourced \textit{Encrypted RERL}: an agent encrypts the model and outsources to the server offline. The server runs \eqref{eq:enc_RERL} over FHE and returns the result, which the agent can decrypt to construct the policy.}
    \label{fig:0}
    \vspace{-5mm}
\end{figure}
After a pre-determined number of iterations, i.e., $k=1, \ldots,T\in\bbN$, the server returns $\bfz_T$ to the client, who decrypts to obtain $\tilde{Z}_{T} = \Dec(\bfz_{T})$, and the client can reconstruct the policy using \eqref{eq:recon}.

\subsection{Convergence and Error Analysis}\label{subsec:convergence}
In what follows, we analyze the performance of the proposed \textit{Encrypted RERL} \eqref{eq:enc_RERL}. 
To this end, we define 
\begin{equation*}
    \tilde{Z}_k := \Dec(\bfz_k), \quad \forall k\ge 0,
\end{equation*}
and show that $\tilde{Z}_k$ converges to a vicinity of the optimal $Z^{*}$ value.
We derive an explicit upper bound function on the convergence error $\|\tilde{Z}_k-Z^*\|$.
The following lemma first states that $\tilde{Z}_k$ follows a dynamics of the form \eqref{eq:lin_sys} with a bounded perturbation.

\begin{lem}
\label{lem:error_bound}
    The dynamics of $\tilde{Z}_k$ is given by
    \[
        \tilde{Z}_{k+1} = A \tilde{Z}_k + w + \epsilon_k, \quad \tilde{Z}_0 = Z_0 + \epsilon_0,
    \]
    where $\| \epsilon_0 \| \le \beta_0$ and $\| \epsilon_k \| \le \beta$ for all $k\ge 0$
    for some positive constants $\beta_0 \in \calO(N / \Delta)$ and $\beta \in \calO(SN / \Delta)$.
\end{lem}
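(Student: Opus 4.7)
The plan is to prove both claims by decrypting the encrypted recursion \eqref{eq:enc_RERL} step by step and tracking how the errors from Lemma~\ref{lem:homo} accumulate through its individual operations. The initial bound is immediate: since $\bfz_0=\Enc(Z_0)$, applying \eqref{eq:correct} gives $\tilde{Z}_0-Z_0=\epsilon_0$ with $\|\epsilon_0\|\le \sfB^{\Enc}$, so one may take $\beta_0=\sfB^{\Enc}\in\calO(N/\Delta)$.

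For the inductive step, I would decrypt \eqref{eq:enc_RERL} operation by operation and apply the triangle inequality at each stage. For each fixed $i\in\{1,\dots,S\}$: (i) by \eqref{eq:homoMult}, $\Dec(\Enc(A_i)\otimes\bfz_k)$ differs from $A_i\odot\tilde{Z}_k$ by at most $\sfB^{\Mult}$ in infinity norm; (ii) each $\RotCt_r$ preserves infinity-norm error because $\RotVec_r$ is a coordinate permutation, and adds at most $\sfB^{\RotCt}$ by \eqref{eq:homoRot}; (iii) noise-free summation of the $S$ rotated ciphertexts via \eqref{eq:homoAdd}, combined with the identity $\sum_{r=0}^{S-1}\RotVec_r(A_i\odot\tilde{Z}_k)=(A_i^\top\tilde{Z}_k)\mathbf{1}$, identifies the noise-free limit of $\Dec(\bfg_{k,i})$ as $(A_i^\top\tilde{Z}_k)\mathbf{1}$; (iv) multiplying by $\Enc(e_i)$ adds one more $\sfB^{\Mult}$ and masks the result so that the exact part is $(A_i^\top\tilde{Z}_k)e_i$. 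Summing these masked products over $i$ reconstructs the exact term $A\tilde{Z}_k$ through the identity $\sum_i(A_i^\top\tilde{Z}_k)e_i=A\tilde{Z}_k$; then adding $\Enc(w)$ contributes $\sfB^{\Enc}$ via \eqref{eq:correct}, and the outer $\Boot$ contributes $\sfB^{\Boot}$ via \eqref{eq:boot}. Collecting these bounds yields $\tilde{Z}_{k+1}=A\tilde{Z}_k+w+\epsilon_k$, with $\|\epsilon_k\|$ bounded by an explicit polynomial in $S$ multiplied by the constants $\sfB^{\Enc},\sfB^{\Mult},\sfB^{\RotCt},\sfB^{\Boot}$; since all four are $\calO(N/\Delta)$, absorbing the dominant polynomial factor into the Big-$\calO$ gives $\beta\in\calO(SN/\Delta)$ as claimed.

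The main obstacle is the careful bookkeeping through two nested summations---the inner sum over the $S$ rotations forming $\bfg_{k,i}$, and the outer sum over $i=1,\dots,S$ forming $A\tilde{Z}_k$---while verifying that Lemma~\ref{lem:homo} is applied in a way that keeps its hypotheses satisfied. In particular, one must check that \eqref{eq:homoMult} is invoked with $\Enc(A_i)$ and $\Enc(e_i)$ in the fresh-ciphertext slot, and that the bootstrapping bound \eqref{eq:boot} is additive in the existing perturbation rather than multiplicative. The latter is what permits the per-iteration error $\epsilon_k$ to be bounded by a single constant $\beta$ independent of $k$, which will be essential for the convergence analysis to follow.
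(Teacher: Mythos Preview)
Your proposal is correct and follows essentially the same approach as the paper's own proof: both decrypt the recursion \eqref{eq:enc_RERL} operation by operation, invoke the bounds of Lemma~\ref{lem:homo} at each step, and aggregate through the inner rotation sum and the outer masking sum to obtain the explicit bound $\|\epsilon_k\|\le S(2\sfB^{\Mult}+\sfB^{\RotCt})+\sfB^{\Enc}+\sfB^{\Boot}\in\calO(SN/\Delta)$. The only minor difference is that the paper commits to this exact constant, whereas you leave it as ``an explicit polynomial in $S$''; carrying out your bookkeeping---in particular using that the $e_i$-masked error vectors have disjoint supports so their sum has infinity norm $\max_i\|\epsilon^g_{k,i}\|$ rather than $S\cdot\max_i\|\epsilon^g_{k,i}\|$---recovers the linear-in-$S$ constant.
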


\begin{proof}
    By definition, $\tilde{Z}_0 = \Dec(\Enc(Z_0))$, and therefore, $\beta_0:=\sfB^{\Enc} \in \calO(N/\Delta)$ by \eqref{eq:correct}.
    In order to derive the upper bound $\beta$, observe that for each $k\ge 0$ and $i=1,\ldots,S$,
    \begin{align*}
        &\Dec(\bfg_{k,i}) = \textstyle\sum_{r=0}^{S-1} \Dec(\RotCt_r(\Enc(A_i)\otimes \bfz_k)),\! \tag{$\because$ \eqref{eq:homoAdd}} \\
            &= \textstyle\sum_{r=0}^{S-1}\RotVec_r(\Dec(\Enc(A_i)\otimes \bfz_k)) + \epsilon^\Rot_r, \tag{$\because$ \eqref{eq:homoRot}} \\
            &= \textstyle\sum_{r=0}^{S-1} \RotVec_r(A_i\odot \tilde{Z}_k+\epsilon^\Mult ) +\epsilon^\Rot_r, \tag{$\because$ \eqref{eq:homoMult}} \\
            &=: \!\!\underbrace{\textstyle\sum_{r=0}^{S-1} \RotVec_r(A_i\odot \tilde{Z}_k)}_{=:\tilde{g}_{k,i}}  \!+ \underbrace{\!\textstyle\sum_{r=0}^{S-1} \RotVec_r(\epsilon^\Mult ) \!+\epsilon^\Rot_r}_{=:\epsilon^g_{k,i}},
    \end{align*}
    for some $\epsilon^\Mult, \epsilon^\Rot_r \in \bbC^{S}$ bounded by $\|\epsilon^\Mult \| \le \sfB^\Mult$ and $\| \epsilon^\Rot_r\|\le \sfB^\RotCt$ for $r=0,\ldots,S-1$, which leads to
    \begin{equation*}
        \|\epsilon^g_{k,i} \| \le S\cdot\left(\sfB^\Mult + \sfB^\RotCt \right). 
    \end{equation*}
    Then, for each $k\ge 0$,
    \begin{align*}
        &\tilde{Z}_{k+1} = \Dec(\Boot(\Enc(w) \oplus \textstyle\sum_{i=1}^S \Enc(e_i) \otimes \bfg_{k,i} )), \\
        &= \Dec(\Enc(w) \oplus \textstyle\sum_{i=1}^S \Enc(e_i) \otimes \bfg_{k,i}) + \epsilon_k^\Boot, \tag{$\because$ \eqref{eq:boot}}  \\
        &= w \!+\!\! \textstyle\sum_{i=1}^S\! \Dec(\Enc(e_i) \!\otimes\! \bfg_{k,i}\!) \!+\! \epsilon_k^{\Enc} \!+\! \epsilon_k^\Boot\!, \!\!\tag{$\because$ \!\eqref{eq:correct}, \!\eqref{eq:homoAdd}} \\
        &= w \!+\! \textstyle\sum_{i=1}^S e_i \!\odot\! \left(\!\tilde{g}_{k,i} \!+\! \epsilon^g_{k,i}\! \right) \!+\!\epsilon^\Mult_{k,i}\!+\! \epsilon_k^{\Enc} \!+\! \epsilon_k^\Boot\!, \tag{$\because$ \eqref{eq:homoMult}} \\
        &= A\tilde{Z}_k + w + \underbrace{\textstyle\sum_{i=1}^S e_i \odot  \epsilon^g_{k,i} + \epsilon^\Mult_{k,i} + \epsilon_k^{\Enc} + \epsilon_k^\Boot}_{=:\epsilon_k} ,
    \end{align*}
    for some $\epsilon^\Mult_{k,i}, \epsilon_k^{\Enc}, \epsilon_k^\Boot \in \bbC^{S}$ bounded by $\| \epsilon^\Mult_{k,i}\|\le \sfB^\Mult$ for $i=1,\ldots,S$, $\|\epsilon_k^{\Enc} \| \le \sfB^{\Enc}$, and $\|\epsilon_k^\Boot\|\le \sfB^\Boot$. 
    Note that the last equality follows from \eqref{eq:nominal} and the definition of $\tilde{g}_{k,i}$.
    As a result, we have
    \begin{equation*}
        \|\epsilon_k \| \le S\cdot\left(2\sfB^\Mult + \sfB^\RotCt \right) + \sfB^{\Enc} + \sfB^\Boot \in \calO(SN/\Delta),
    \end{equation*}
    and this concludes the proof.
\end{proof}

It can be seen that the perturbations $\beta_0$ and $\beta$ in Lemma~\ref{lem:error_bound} originate from the growth of errors under homomorphic operations.
The following theorem states that the effect of those perturbations on the convergence error $\|\tilde{Z}_k - Z^* \|$ remains bounded for each $k\ge 0$.

\begin{thm}\label{thm:error}
    There exist some constants $c \ge 1$ and $\alpha \in (0, 1)$ such that 
    \[
        \| \tilde{Z}_k - Z^* \| < c \left( \alpha^k ( \| Z_0 - Z^* \| + \beta_0 ) + \frac{ \beta }{ 1 - \alpha } \right),
    \]
    where $\beta_0$ and $\beta$ are as in Lemma~\ref{lem:error_bound}.
\end{thm}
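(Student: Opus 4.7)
The plan is to reduce the statement to the standard bound for a discrete-time linear system driven by bounded disturbances, leveraging the contraction established in Lemma~\ref{lem:contract}. Since $Z^*$ is the fixed point, $Z^* = AZ^* + w$, I would subtract this identity from the perturbed recursion in Lemma~\ref{lem:error_bound} to obtain the error dynamics
\[
    e_{k+1} = A e_k + \epsilon_k, \quad e_0 = Z_0 - Z^* + \epsilon_0,
\]
where $e_k := \tilde{Z}_k - Z^*$. Unrolling this recursion in closed form gives
\[
    e_k = A^k e_0 + \sum_{j=0}^{k-1} A^{k-1-j}\, \epsilon_j.
\]

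Next I would invoke a standard consequence of Gelfand's spectral radius formula: since Lemma~\ref{lem:contract} yields $r(A) < 1$, for any $\alpha \in (r(A), 1)$ there exists a constant $c \ge 1$ (depending on $A$ and $\alpha$) such that $\|A^k\| \le c\,\alpha^k$ for all $k \ge 0$, where $\|\cdot\|$ here denotes the induced infinity norm. Choosing such a pair $(c, \alpha)$, taking norms on both sides, applying the triangle inequality, and using the bounds $\|e_0\| \le \|Z_0 - Z^*\| + \beta_0$ and $\|\epsilon_j\| \le \beta$ from Lemma~\ref{lem:error_bound} gives
\[
    \|e_k\| \le c\,\alpha^k(\|Z_0 - Z^*\| + \beta_0) + c\beta \sum_{j=0}^{k-1} \alpha^{k-1-j}.
\]

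Finally I would upper bound the geometric partial sum $\sum_{j=0}^{k-1} \alpha^{k-1-j}$ by the full geometric series $1/(1-\alpha)$, which yields the stated inequality. The only nonroutine ingredient is the existence of the pair $(c,\alpha)$ with $c \ge 1$ and $\alpha \in (0,1)$ satisfying $\|A^k\| \le c\alpha^k$; this is a standard consequence of $r(A) < 1$ (one can either use Gelfand's formula and slightly inflate $r(A)$ to $\alpha$, or pass to a Jordan-form-based norm in which $A$ is a strict contraction and compare norms), so I expect no serious obstacle — the proof is essentially a textbook input-to-state stability argument specialized to the present perturbation model.
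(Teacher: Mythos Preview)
Your proposal is correct and mirrors the paper's proof almost exactly: both unroll the perturbed recursion from Lemma~\ref{lem:error_bound}, subtract the fixed-point identity $Z^*=AZ^*+w$ (you do this first to form $e_k$, the paper does it after unrolling), invoke $r(A)<1$ from Lemma~\ref{lem:contract} to obtain $\|A^k\|\le c\alpha^k$, and then bound the resulting geometric sum by $1/(1-\alpha)$. The only cosmetic difference is your explicit mention of Gelfand's formula for the existence of $(c,\alpha)$, which the paper simply asserts.
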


\begin{proof}
    If holds from   Lemma~\ref{lem:error_bound} that
    \[
        \tilde{Z}_k = A^k \tilde{Z}_0 + \textstyle\sum_{s=0}^{k-1} A^s w + \textstyle\sum_{s=0}^{k-1} A^s \epsilon_{k-1-s}, ~~ \tilde{Z}_0 = Z_0 + \epsilon_0,
    \]
    where $\| \epsilon_0 \| \le \beta_0$ and $\| \epsilon_{k-1-s} \| \le \beta$.
    In addition, the contractivity of $A$ (Lemma~\ref{lem:contract}) implies that there exist $c \ge 1$ and $\alpha \in (0, 1)$ such that $\| A^k \| \le c \alpha^k$ for all $k \ge 0$.
    Therefore, it follows that
    \begin{align*}
        \| \tilde{Z}_k - Z^* \|
        &\le \| A^k \| \| \tilde{Z}_0 - Z^* \| + \textstyle\sum_{s=0}^{k-1} \| A^s \| \| \epsilon_{k-1-s} \|, \\
        &< c \alpha^k ( \| Z_0 - Z^* \| + \beta_0 ) + \textstyle\sum_{s=0}^\infty c \alpha^s \beta,
    \end{align*}
    where $Z^* = A Z^* + w = A^k Z^* + \sum_{s=0}^{k-1} A^s w$.
    The claim holds because $\sum_{s=0}^\infty c \alpha^s = c / (1 - \alpha)$.
\end{proof}

Since $\alpha\in(0,1)$, it holds from Theorem~\ref{thm:error} that 
\begin{equation*}
    \limsup_{k\to \infty} \|\tilde{Z}_k - Z^* \| < c\frac{\beta}{1-\alpha},
\end{equation*}
where the right-hand-side can be made arbitrarily small by increasing the scale factor $\Delta$ relative to the degree $N$. However, as discussed in Section~\ref{subsec:CKKS}, for security reasons, increasing $\Delta$ in turn needs increasing $N$, which can slow down the computation speed.

\section{Numerical Experiment}\label{experiment}
We have performed a numerical experiment to: (i) test the feasibility and correctness of the \textit{Encrypted RERL}, and (ii) to validate the analysis on encryption-induced errors, and the computation time based on the CKKS parameters.

The experiment was set up for the \emph{Grid-World} environment, a simple toy example frequently employed for tabular RL for finite state MDPs. Following parameters were held constant after arbitrarily chosen: $\lambda = 10.0$, $C(x, u)=0.5$ (the stage cost) for all non-absorbing states and action pairs, $|\mathcal{U}| = 9$ for horizontal, vertical, and diagonal directional movements as well as do-nothing.
The state space size $|\mathcal{X}|$ was constrained by the parameter $N$ as we wanted each $Z$ to have a dimension less than or equal to the slot size $N/2$. For example, for $N=2^3$, the slot size needs to be $N/2 = 4$, and thus $|\mathcal{X}|=4$ (including one absorbing state). However, we expect that parallelization can resolve this constraint.
An agent's default policy was set as a uniform distribution over possible actions, i.e., $b(u|x) = 1/|\mathcal{U}|$.

\begin{figure}
    \centering  
    \includegraphics[trim=5pt 5pt 5pt 5pt, clip, width=\columnwidth]{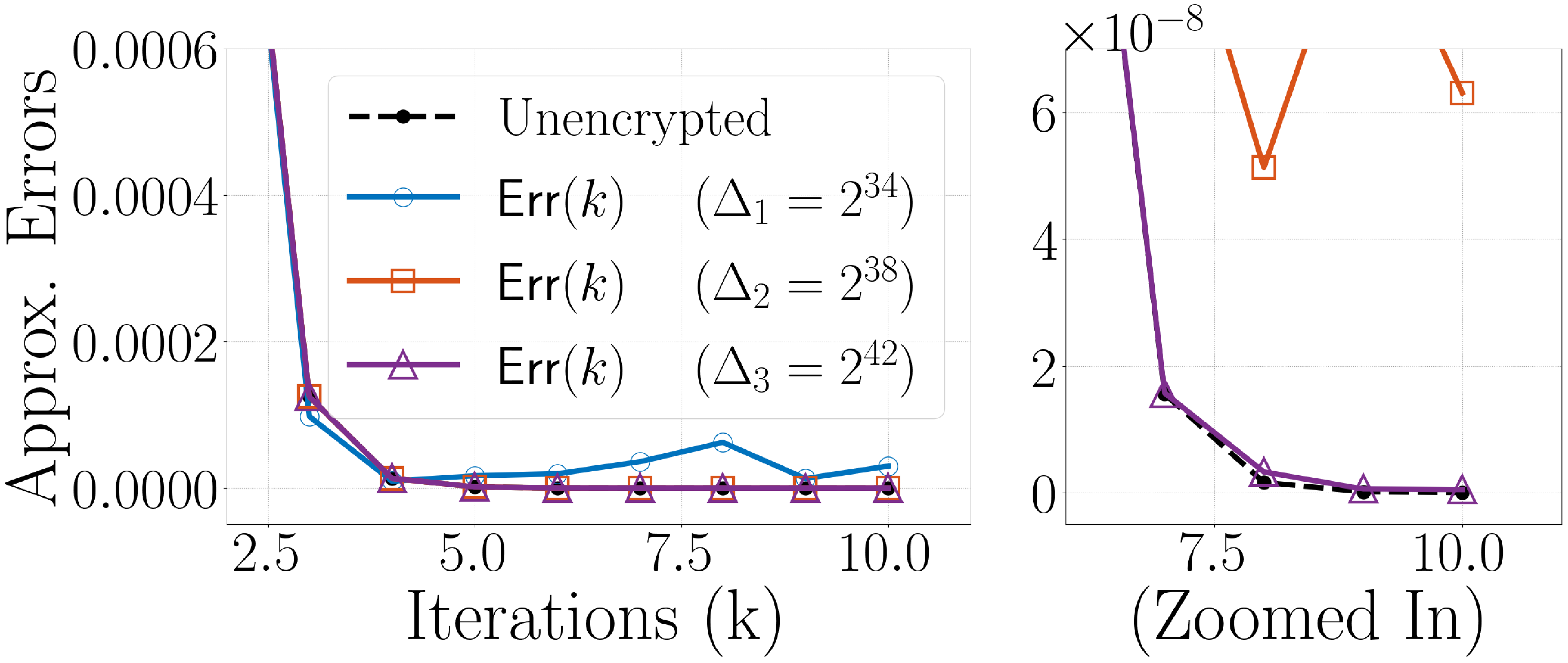}
    \vspace{-5mm}
    \caption{Encryption errors vs. scaling factors $\Delta_{1,2,3} = (2^{34}, 2^{38}, 2^{42})$.}
    \label{fig:1}
    \vspace{-5mm}
\end{figure}

For the experiment, we first computed the optimal value $Z^{*}$ as a reference using \eqref{eq:z_star_linear}. Upon observing that the unencrypted VI of \eqref{eq:lin_sys} converged rapidly (less than $T=30$) under the convergence tolerance $1\mathrm{e}{-10}$, we set the max iteration at $T=50$ to ensure a reasonable number of iterations for the encrypted counterpart. At each iteration, we measured the computation time and the encryption error defined by 
$$\mathsf{Err}(k) := \frac{\mathbb{E}_{q_{i}\in \mathcal{X} \setminus \{x_{\text{abs}}\}}[|\tilde{Z}_{k} - Z^{*}|]}{\mathbb{E}_{q_{i}\in \mathcal{X} \setminus \{x_{\text{abs}}\}}[Z^{*}]}.$$
For reference, Config. 3 has the following sample values:
\begin{align}
    Z^* &= 1.00\mathrm{e}{-2} \begin{bmatrix} 2.4295 & 4.1524 & 3.5592 & 3.7762 & \cdots \end{bmatrix} \nonumber\\
    \tilde{Z}_{T} &= 1.00\mathrm{e}{-2} \begin{bmatrix} 2.4290 & 4.1518 & 3.5588 & 3.7758 & \cdots \end{bmatrix} \nonumber
\end{align}

\begin{table}[t]
    \centering
    \caption{Computation time statistics (Mean, Min, and Max) for each iteration of \eqref{eq:enc_RERL} and encryption error at the final iteration.}
    \setlength{\tabcolsep}{3pt}
    \begin{tabular}{c|*{3}{c}|*{3}{c}*{1}{c}}
        \toprule
        \multirow{2}{*}{\textbf{Config.}} & \multicolumn{3}{c|}{\textbf{Parameters}} & \multicolumn{4}{c}{\textbf{Results}} \\
        & $S$ & $N$ & $\Delta$ & Mean (\SI{}{s}) & Min (\SI{}{s}) & Max (\SI{}{s})  & $\mathsf{Err}(T)$ \\
        \midrule        
        1 & $3$ & $2^7$ & $2^{28}$ & $8.05$ & $5.79$ & $8.35$ & $6.04\mathrm{e}{-4}$ \\
        2 & $3$ & $2^7$ & $2^{30}$ & $8.19$ & $5.82$ & $8.91$ & $1.05\mathrm{e}{-4}$ \\
        3 & $7$ & $2^7$ & $2^{28}$ & $32.31$ & $19.25$ &  $37.44$ & $1.32\mathrm{e}{-3}$ \\
        4 & $7$ & $2^7$ & $2^{32}$ & $32.34$ & $20.49$ & $37.83$ & $3.54\mathrm{e}{-4}$ \\
        5 & $3$ & $2^8$ & $2^{29}$ & $41.75$ & 
        $29.06$ & $43.05$ & $6.63\mathrm{e}{-4}$ \\
        6 & $3$ & $2^{10}$ & $2^{30}$ & $2027$ & $1561$ &  $2097$ &  $4.31\mathrm{e}{-3}$ \\
        \bottomrule
    \end{tabular}
    \label{tab:1}
    \vspace{-5mm}
\end{table}
Note that the ciphertext modulus was set at $q=2^{64}$ for each Config. Results in Table~\ref{tab:1} numerically support the theory as (i) increasing $N$ or $S$ increases encrypted computation times; compare Configs. $2$--$6$, or $1$--$3$, and (ii) increasing $\Delta$ generally reduces encryption errors per step; compare Configs. $1$--$2$, or $3$--$4$. This supports Lemma~\ref{lem:error_bound} and Theorem~\ref{thm:error}, which suggested error bounds in $\calO(SN / \Delta)$ per step, asymptotic convergence to a vicinity of the optimal value, and finally, that the error bound could be made smaller by increasing $\Delta$, as visualized in Fig.~\ref{fig:1}.
The results also agree with intuition as both $S$ and $N$ determine the size of the computation at each iteration, for example, $S$ directly affects the number of encrypted operations (see \ref{encrypted_RERL}).
Lastly, as $N$ is a security parameter, it suggests a trade-off between the increased security level vs. computation time. 
The practical implication is that we should increase $\Delta$ relative to $S$ and $N$ to achieve better precision. The impact on practical security is beyond the scope of this study and left to future analysis.

Data obtained, and the complete implementation of the algorithms described in this paper---including low-level encryption utilities and high-level encrypted algorithms we used---are available at \url{https://github.com/jsuh9/HERL}
and results can be reproduced.

\section{Conclusion} \label{final}

We proposed \textit{Encrypted RERL}---a homomorphically encrypted policy synthesis algorithm---as a step towards privacy-preserving model-based RL.
We observed that the RERL framework can eliminate the need for evaluating the $\min$ operation, enabling an efficient encrypted RL over FHE. The effects of encryption errors in the proposed algorithm were theoretically analyzed. A software library has been developed and publicly released to support numerical experiments, and results validated the feasibility and analysis.

For future work, we aim to investigate the class of efficient encrypted RL algorithms beyond tabular RL in the RERL framework. Moreover, performance optimization in terms of underlying encrypted operations can be pursued to reduce the gap between unencrypted and encrypted computations.

\addtolength{\textheight}{-12cm}   




\bibliographystyle{IEEEtran}
\bibliography{IEEEabrv, encRLAbbrev} 

\begin{thebibliography}{10}
\providecommand{\url}[1]{#1}
\csname url@samestyle\endcsname
\providecommand{\newblock}{\relax}
\providecommand{\bibinfo}[2]{#2}
\providecommand{\BIBentrySTDinterwordspacing}{\spaceskip=0pt\relax}
\providecommand{\BIBentryALTinterwordstretchfactor}{4}
\providecommand{\BIBentryALTinterwordspacing}{\spaceskip=\fontdimen2\font plus
\BIBentryALTinterwordstretchfactor\fontdimen3\font minus \fontdimen4\font\relax}
\providecommand{\BIBforeignlanguage}[2]{{%
\expandafter\ifx\csname l@#1\endcsname\relax
\typeout{** WARNING: IEEEtran.bst: No hyphenation pattern has been}%
\typeout{** loaded for the language `#1'. Using the pattern for}%
\typeout{** the default language instead.}%
\else
\language=\csname l@#1\endcsname
\fi
#2}}
\providecommand{\BIBdecl}{\relax}
\BIBdecl

\bibitem{sutton2018reinforcement}
R.~S. Sutton and A.~G. Barto, \emph{Reinforcement learning: An introduction}, 2nd~ed.\hskip 1em plus 0.5em minus 0.4em\relax \!\!\!Cambridge {MA}, USA: MIT press, 2018.

\bibitem{kim2022comparison}
J.~Kim, D.~Kim, Y.~Song, H.~Shim, H.~Sandberg, and K.~H. Johansson, ``Comparison of encrypted control approaches and tutorial on dynamic systems using {L}earning {W}ith {E}rrors-based homomorphic encryption,'' \emph{Annu. Rev. Control}, vol.~54, pp. 200--218, 2022.

\bibitem{schluter2023brief}
N.~Schl{\"u}ter, P.~Binfet, and M.~S. Darup, ``A brief survey on encrypted control: From the first to the second generation and beyond,'' \emph{Annu. Rev. Control}, vol.~56, 2023.

\bibitem{Kogiso2015CybersecurityEO}
K.~Kogiso and T.~Fujita, ``Cyber-security enhancement of networked control systems using homomorphic encryption,'' \emph{Proc. 54th IEEE Conf. Decision Control}, pp. 6836--6843, 2015.

\bibitem{Teranishi2021InputOutputHF}
K.~Teranishi, T.~Sadamoto, and K.~Kogiso, ``Input--output history feedback controller for encrypted control with leveled fully homomorphic encryption,'' \emph{IEEE Trans. Control Netw. Syst.}, vol.~11, no.~1, pp. 271--283, 2024.

\bibitem{schluter2021encrypted}
N.~Schl{\"u}ter, M.~Neuhaus, and M.~S. Darup, ``Encrypted dynamic control with unlimited operating time via {FIR} filters,'' in \emph{Proc. 2021 Eur. Control Conf.}, 2021, pp. 952--957.

\bibitem{kim2022dynamic}
J.~Kim, H.~Shim, and K.~Han, ``Dynamic controller that operates over homomorphically encrypted data for infinite time horizon,'' \emph{IEEE Trans. Autom. Control}, vol.~68, no.~2, pp. 660--672, 2023.

\bibitem{jang2024ring}
Y.~Jang, J.~Lee, S.~Min, H.~Kwak, J.~Kim, and Y.~Song, ``Ring-{LWE} based encrypted controller with unlimited number of recursive multiplications and effect of error growth,'' \emph{arXiv:2406.14372}, 2024.

\bibitem{ec_mpc1}
M.~{S. Darup}, A.~{Redder}, I.~{Shames}, F.~{Farokhi}, and D.~{Quevedo}, ``Towards encrypted {MPC} for linear constrained systems,'' \emph{IEEE Control Syst. Lett.}, vol.~2, no.~2, pp. 195--200, 2018.

\bibitem{ec_mpc2}
A.~B. {Alexandru}, M.~{Morari}, and G.~J. {Pappas}, ``Cloud-based {MPC} with encrypted data,'' in \emph{Proc. 57th Conf. Control Decision}, 2018.

\bibitem{ec_cooperative2}
A.~B. Alexandru, M.~S. Darup, and G.~J. Pappas, ``Encrypted cooperative control revisited,'' in \emph{Proc. 58th IEEE Conf. Decision Control}, 2019, pp. 7196--7202.

\bibitem{alexandru2020cloud}
A.~B. Alexandru, K.~Gatsis, Y.~Shoukry, S.~A. Seshia, P.~Tabuada, and G.~J. Pappas, ``Cloud-based quadratic optimization with partially homomorphic encryption,'' \emph{IEEE Trans. Autom. Control}, vol.~66, 2021.

\bibitem{schluter2022pidtuning}
N.~Schl{\"u}ter, M.~Neuhaus, and M.~S. Darup, ``Encrypted extremum seeking for privacy-preserving pid tuning as-a-service,'' in \emph{Proc. 2022 Eur. Control Conf.}, 2022, pp. 1288--1293.

\bibitem{suh2021sarsa}
J.~Suh and T.~Tanaka, ``{SARSA(0)} {Reinforcement Learning} over {Fully Homomorphic Encryption},'' in \emph{2021 SICE Int. Symp. Control Syst.}, 2021, pp. 1--7.

\bibitem{suh2021encrypted}
------, ``Encrypted {Value Iteration} and {Temporal Difference Learning} over {Leveled Homomorphic Encryption},'' in \emph{Proc. 2021 Amer. Control Conf.}, 2021, pp. 2555--2561.

\bibitem{dzurkova2024approximated}
D.~Dzurkov{\'a}, P.~Val{\'a}bek, O.~M{\'e}sz{\'a}ros, M.~Kal{\'u}z, and M.~Klau{\v{c}}o, ``Approximated explicit {NMPC} via {Reinforcement Learning} for homomorphically encrypted process control,'' in \emph{Proc. 63rd IEEE Conf. Decision Control}, 2024, pp. 4574--4581.

\bibitem{gentry}
C.~Gentry, ``{A Fully Homomorphic Encryption Scheme},'' Ph.D. dissertation, Dept. Comput. Sci., Stanford Univ., Stanford, CA, USA, 2009.

\bibitem{cheon2017homomorphic}
J.~H. Cheon, A.~Kim, M.~Kim, and Y.~Song, ``{Homomorphic Encryption for Arithmetic of Approximate Numbers},'' in \emph{Int. Conf. Theory Appl. Cryptol. Inf. Secur.}, 2017, pp. 409--437.

\bibitem{cheon2018bootstrapping}
J.~H. Cheon, K.~Han, A.~Kim, M.~Kim, and Y.~Song, ``Bootstrapping for {Approximate Homomorphic Encryption},'' in \emph{Annu. Int. Conf. Theory Appl. Cryptograph. Techn.}, 2018, pp. 360--384.

\bibitem{LyubPeik10}
V.~Lyubashevsky, C.~Peikert, and O.~Regev, ``On ideal lattices and learning with errors over rings,'' in \emph{J. ACM}, vol.~60, no.~6, 2010.

\bibitem{puterman2014markov}
M.~L. Puterman, \emph{{Markov Decision Processes}: discrete stochastic dynamic programming}, 1st~ed.\hskip 1em plus 0.5em minus 0.4em\relax \!\!\!Hoboken, NJ, USA: Wiley, 2014.

\bibitem{hecomparison}
J.~H. Cheon, D.~Kim, D.~Kim, H.~H. Lee, and K.~Lee, ``Numerical method for comparison on homomorphically encrypted numbers,'' Cryptology {ePrint} Archive, Paper 2019/417, 2019.

\bibitem{todorov2009efficient}
E.~Todorov, ``Efficient computation of optimal actions,'' \emph{Proc. {Nat. Acad. Sci.}}, vol. 106, no.~28, 2009.

\bibitem{dupuis2011weak}
P.~Dupuis and R.~S. Ellis, \emph{A weak convergence approach to the theory of large deviations}.\hskip 1em plus 0.5em minus 0.4em\relax John Wiley \& Sons, 2011.

\bibitem{horn2012matrix}
R.~A. Horn and C.~R. Johnson, \emph{Matrix {Analysis}}, 2nd~ed.\hskip 1em plus 0.5em minus 0.4em\relax \!\!\!New York, NY, USA: Cambridge Univ. Press, 2012.

\end{thebibliography}

\end{document}